\documentclass{article}

\PassOptionsToPackage{numbers, compress}{natbib}
 \usepackage[preprint]{neurips_2026}

\usepackage[utf8]{inputenc} % allow utf-8 input
\usepackage[T1]{fontenc}    % use 8-bit T1 fonts
\usepackage{hyperref}       % hyperlinks
\usepackage{url}            % simple URL typesetting
\usepackage{booktabs}       % professional-quality tables
\usepackage{amsfonts}       % blackboard math symbols
\usepackage{nicefrac}       % compact symbols for 1/2, etc.
\usepackage{microtype}      % microtypography
\usepackage{xcolor}         % colors

\usepackage{multirow}
\usepackage{graphicx}
\usepackage{amsmath}
\usepackage{subcaption}
\usepackage{arydshln}
\usepackage{amsthm}
\newtheorem{theorem}{Theorem}
\usepackage{amssymb}
\usepackage[table]{xcolor}

\usepackage{algorithm}
\usepackage{algorithmic}
\usepackage{booktabs}
\usepackage[most]{tcolorbox}
\usepackage{enumitem}

\tcbuselibrary{listings,breakable,skins}

\newtcblisting{promptbox}[2][]{%
  breakable,
  colback=gray!3,
  colframe=black!60,
  boxrule=0.6pt,
  arc=1mm,
  left=1mm,right=1mm,top=1mm,bottom=1mm,
  title={#2},
  fonttitle=\bfseries,
  listing only,
  listing options={basicstyle=\ttfamily\small,breaklines=true,columns=fullflexible},
  #1
}
\newcommand{\gc}[1]{\cellcolor{gray!14}{#1}}

\title{BALTO: Balanced Token-Level Policy Optimization for Hallucination Mitigation}

\author{
Ning Li\textsuperscript{1}\thanks{Equal contribution.}~~,~~Zixuan Guo\textsuperscript{2}\footnotemark[1]~~,~~Yan Xu\textsuperscript{2},~~Wenbo Fei\textsuperscript{1},~~Yifan Niu\textsuperscript{3},~~Chang Luo\textsuperscript{2}, \\
\textbf{Yasheng Wang\textsuperscript{2}}\thanks{Correspondence to: Yasheng Wang (\texttt{asheryswang@tencent.com}) and Weiwen Liu (\texttt{wwliu@sjtu.edu.cn}).}~~,~~\textbf{Weiwen Liu\textsuperscript{1}}\footnotemark[2]~~,~~\textbf{Yong Yu\textsuperscript{1}},~~\textbf{Weinan Zhang\textsuperscript{1}}\\
\textsuperscript{\rm 1}Shanghai Jiao Tong University
\\
  \textsuperscript{\rm 2}Tencent
\\
  \textsuperscript{\rm 3}The Hong Kong University of Science and Technology (Guangzhou)
}

\begin{document}

\maketitle

\begin{abstract}
Hallucinations remain a major obstacle to deploying large language models (LLMs) in knowledge-intensive settings, where generated responses must be faithfully grounded in provided evidence. 
Reinforcement learning (RL) is a promising direction for hallucination mitigation, but response-level faithfulness rewards suffer from a granularity mismatch: localized hallucinations can cause supported content to receive spurious penalties.
Although recent work introduces fine-grained feedback such as claim-level verification and token-level rewards, unbalanced credit assignment can still induce length, verbosity, or optimization-noise biases.
We propose \textbf{BALTO}, a \textbf{Bal}anced \textbf{T}oken-level Policy \textbf{O}ptimization framework for hallucination mitigation.
BALTO extracts checkable factual claims, verifies them against the reference context, and projects claim-level judgments to token-level labels.
A balanced token-level credit assignment mechanism is introduced into the framework. This design redistributes probability mass from unsupported content toward faithful content, rather than suppressing the entire response.
We systematically analyze the limitations of response-level rewards from a theoretical standpoint, and prove BALTO’s advantages in training stability and optimization efficiency for hallucination mitigation. 
Experiments on ConFiQA, RAGTruth, and FinLLM-Eval show that BALTO achieves the highest faithfulness across all six model--benchmark settings and consistently outperforms existing post-training baselines in Q-Score, demonstrating a stronger faithfulness--informativeness trade-off.

\end{abstract}

\section{Introduction}
Large Language Models (LLMs) have demonstrated remarkable capabilities across a wide range of tasks \citep{shao2024deepseekmath, zhang2024codeagent}. However, their real-world deployment remains significantly constrained by faithfulness hallucinations—generating content that lacks factual support or contradicts provided evidence \citep{tamber2025benchmarking}. Such hallucinations pose significant risks in high-stakes domains such as healthcare \citep{kim2025medical}, law \citep{dahl2024large}, finance \citep{kang2023deficiency}, and scientific discovery \citep{xiong2025toward}. Unlike global factual failures, these hallucinations often appear as localized, unsupported spans (e.g., incorrect entities, dates, numerical values, or logical relations) embedded within otherwise coherent and accurate content. This locality makes them particularly difficult to detect and mitigate.

Currently, most reinforcement learning (RL)-based methods for mitigating hallucinations employ response-level rewards, assigning a single scalar score to the entire output. However, such coarse-grained supervision cannot distinguish hallucinated spans from faithful ones: a uniform penalty applied to all tokens may erroneously penalize correct information while providing only weak corrective signals to the actual hallucinated content. This not only diminishes the effectiveness of hallucination mitigation but may also induce reward hacking: models may learn to generate shorter, more conservative responses to avoid penalties, thereby reducing the overall informativeness and utility of the response.

Recent studies~\citep{wen2025policy,li2026reasoning} have moved beyond response-level supervision to incorporate fine-grained feedback signals, such as step-wise faithfulness verification and token-level dense rewards. 
Nevertheless, fine-grained feedback alone does not determine how policy-gradient credit should be assigned. 
Unbalanced token-level rewards may introduce length or faithful-density bias, while token-level updates whose magnitudes remain tied to response-level advantages can still inherit noise from global response-level rewards. 
These observations suggest that hallucination mitigation requires not only fine-grained detection, but also balanced token-level credit assignment.

Based on this premise, we propose \textbf{BALTO} (\textbf{Bal}anced \textbf{T}oken-Level Policy \textbf{O}ptimization), a reinforcement learning framework for fine-grained hallucination mitigation.
BALTO localizes hallucinations at the token level and introduces a balanced credit assignment mechanism: hallucinated tokens receive negative advantages, faithful tokens receive positive compensatory advantages, and neutral tokens are excluded.
By balancing the total negative and positive faithfulness-driven signals within each response, BALTO encourages the model to redistribute probability mass from hallucinated content toward faithful content, rather than suppressing the response as a whole.

We provide both theoretical and empirical analysis to demonstrate the benefits of BALTO.
Theoretically, we show that response-level rewards induce spurious credit assignment under localized hallucinations, whereas balanced token-level credit assignment reduces unnecessary parametric updates and improves optimization stability and efficiency.
Empirically, across ConFiQA, RAGTruth, and FinLLM-Eval with Qwen3-8B and Qwen3-4B, BALTO consistently outperforms both response-level and token-level post-training baselines in both faithfulness and Q-Score across all six model--benchmark settings.
Compared with the strongest post-training baseline in each setting, BALTO improves Q-Score by 2.7 points on average and up to 4.9 points, while improving faithfulness by 3.1 points on average and up to 6.7 points.
These results suggest that BALTO provides a more effective faithfulness--informativeness trade-off than prior post-training approaches, reducing hallucinations without collapsing into overly conservative generation.

Our main contributions are summarized as follows:
\begin{itemize}[leftmargin=*, itemsep=3pt, topsep=0pt]
    \item \textbf{Balanced Token-level Hallucination Mitigation.} We propose BALTO, a token-level RL framework for fine-grained hallucination mitigation. BALTO converts claim-level factuality judgments into token-level optimization signals and uses balanced credit assignment to shift probability mass from hallucinated content toward faithful content.

    \item \textbf{Theoretical Credit Assignment Analysis.} We systematically analyze the credit assignment limitations of response-level faithfulness rewards and show that balanced token-level advantages reduce unnecessary parametric updates, improving training stability and optimization efficiency.

    \item \textbf{SOTA Performance.} Across ConFiQA, RAGTruth, and FinLLM-Eval with Qwen3-8B and Qwen3-4B, BALTO achieves the highest faithfulness in all six model--benchmark settings and consistently outperforms existing post-training methods in Q-Score, demonstrating a stronger faithfulness--informativeness trade-off.
\end{itemize}

\section{Background and Motivation}
\label{sec:background}

\subsection{Pilot Study: Hallucinations Are Token-Sparse but Response-Prevalent}
\label{sec:pilot}

\begin{figure}[t]
\vspace{-5pt}
  \centering
  \begin{minipage}[t]{0.49\textwidth}
    \centering
    \includegraphics[width=1\linewidth]{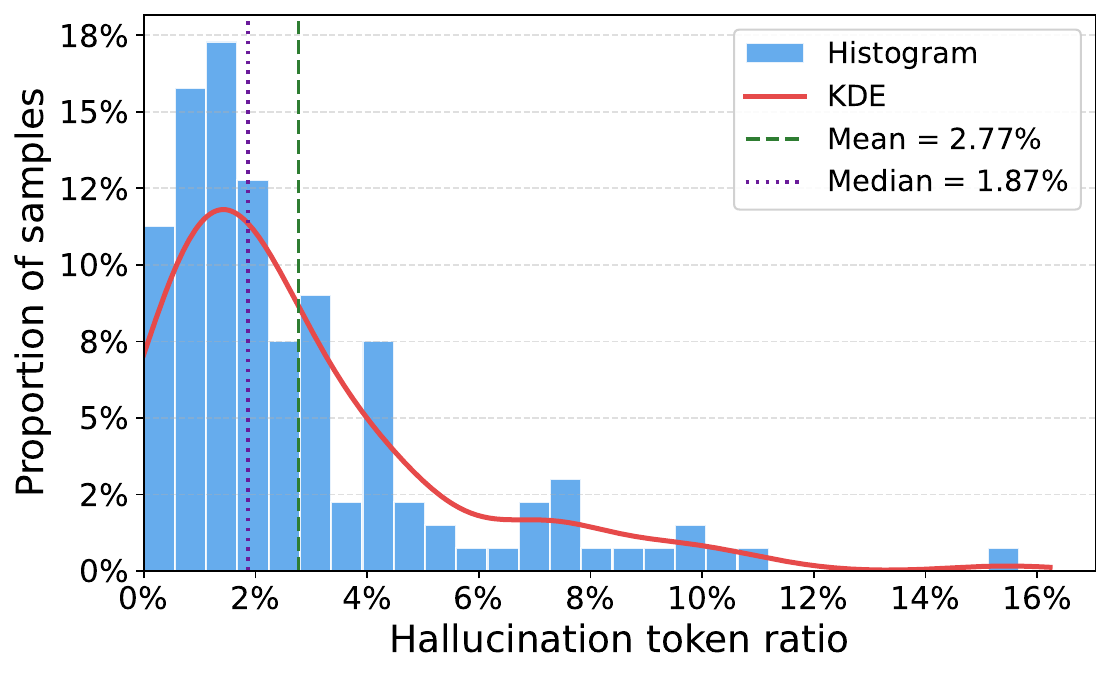}
    \vspace{-1mm}
    \centerline{\small (a) Hallucinated-token ratio per response.}
  \end{minipage}
  \hfill
  \begin{minipage}[t]{0.49\textwidth}
    \centering
    \includegraphics[width=1.0\linewidth]{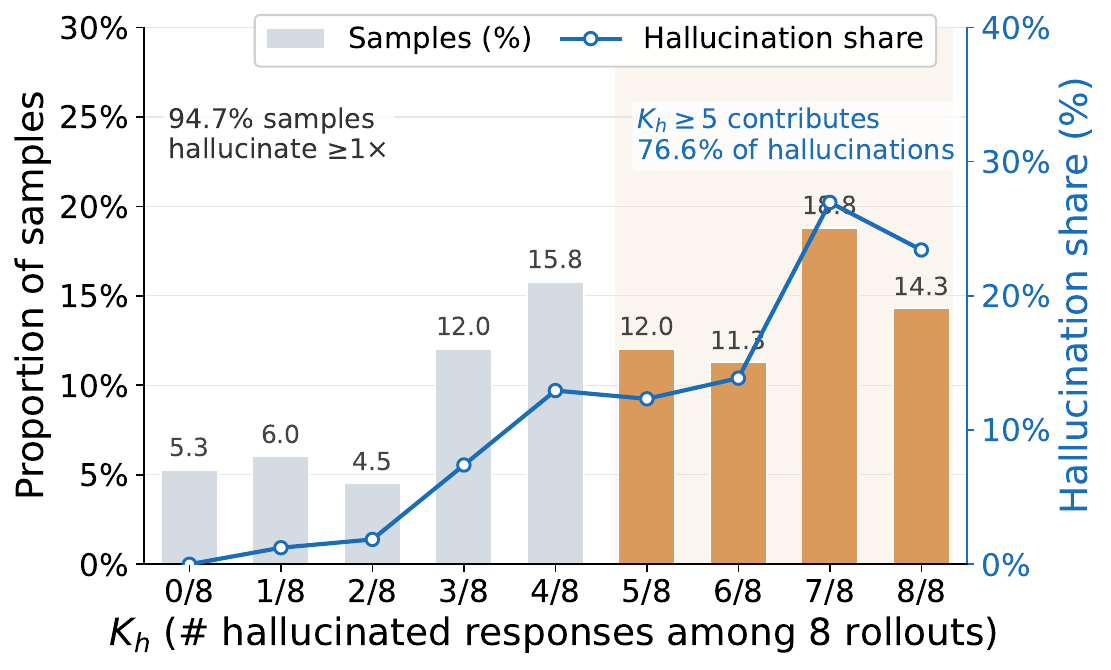}
    \vspace{-1mm}
    \centerline{\small (b) Hallucinated responses among eight rollouts.}
  \end{minipage}
  \caption{
  Pilot study on FinLLM-Eval with Qwen3-8B.
  Hallucinations are sparse within individual responses but prevalent across rollouts.
  The shaded region in (b) marks majority-hallucinating groups.
  }
  \label{fig:pilot_hallucination_distribution}
    \vspace{-10pt}
  
\end{figure}

Faithfulness hallucinations in RAG-style generation are often localized factual errors rather than complete response-level failures.
Prior studies similarly analyze factual errors at the level of claims, atomic facts, or evidence-grounded statements~\citep{ji2023survey, alansari2026large, chen2025learning, ren2025knowrl, yin2026mitigating}.
To quantify this locality, we conduct a pilot study on FinLLM-Eval~\citep{finLLM-Eval}, a financial QA benchmark requiring answers to be grounded in numerical evidence.

As shown in Figure~\ref{fig:pilot_hallucination_distribution}(a), hallucinated tokens account for only a small fraction of each response, with an average ratio of 2.77\% and a median ratio of 1.87\%.
However, this token-level sparsity does not imply response-level sparsity.
Figure~\ref{fig:pilot_hallucination_distribution}(b) shows that 94.7\% of rollout groups contain at least one hallucinated response, and groups with majority-hallucinating rollouts ($K_h\geq5$) contribute 76.6\% of all hallucinations.
Thus, hallucinations are sparse within responses but prevalent across sampled responses.
This dual pattern makes response-level RL supervision inefficient: partially faithful responses with only a few erroneous tokens may receive the same binary penalty as severely hallucinated responses, motivating fine-grained and balanced token-level credit assignment.

\subsection{Response-Level RL for Hallucination Mitigation}
\label{sec:response_level_rl}

RL has become a common post-training paradigm for improving LLM faithfulness and alignment~\citep{shao2024deepseekmath, song2025r1, chen2503research, wen2025policy, li2026reasoning}.
In a RAG setting, given an input $x=(q,d)$ with query $q$ and reference documents $d$, the model generates a response $y=(y_1,\ldots,y_T)$ according to an autoregressive policy.
In GRPO-style optimization, for each input $x$, a group of $K$ responses $\{y_i\}_{i=1}^{K}$ is sampled from the old policy $\pi_{\theta_{\mathrm{old}}}$.
Given a response-level reward $r_i=R(x,y_i)$, GRPO computes the group-relative advantage:
$A_i^{\mathrm{resp}}=\frac{r_i-\operatorname{mean}\left(\boldsymbol{r}\right)}{\operatorname{std}\left(\boldsymbol{r}\right)}$, where $\boldsymbol{r}=\{r_i\}_{i=1}^{K}$.
This scalar advantage is broadcast to all tokens in the response. The clipped GRPO objective is then
{\small
\begin{align}
&\mathcal{J}_{\mathrm{GRPO}}(\theta)
=
\mathbb{E}_{\{y\}^K \sim \pi_{\theta_{old}}(\cdot|x)} \notag \\
&\left[
\frac{1}{K}
\sum_{i=1}^{K}\frac{1}{T_i}
\sum_{t=1}^{T_i}
\min
\left(
\frac{\pi_\theta(y_{i,t} \mid x, y_{i,<t})}{\pi_{\theta_{\mathrm{old}}}(y_{i,t} \mid x, y_{i,<t})} A_{i}^{resp},
\operatorname{clip}(\frac{\pi_\theta(y_{i,t} \mid x, y_{i,<t})}{\pi_{\theta_{\mathrm{old}}}(y_{i,t} \mid x, y_{i,<t})}), 1-\epsilon, 1+\epsilon) A_{i}^{resp}
\right)
\right]
.
\end{align}
}

This response-level assignment is problematic for localized hallucinations.
Let $\mathcal{H}_i$ and $\mathcal{F}_i$ denote hallucinated and faithful factual tokens in response $y_i$.
If a few hallucinated tokens cause a low faithfulness reward, then $A_i^{\mathrm{resp}}<0$, and the update is proportional to
\begin{equation}
A_i^{\mathrm{resp}}
\left(
\sum_{t\in\mathcal{H}_i}
\nabla_\theta \log \pi_\theta(y_{i,t}\mid x,y_{i,<t})
+
\sum_{t\in\mathcal{F}_i}
\nabla_\theta \log \pi_\theta(y_{i,t}\mid x,y_{i,<t})
\right).
\label{eq:spurious_credit}
\end{equation}
The first term suppresses hallucinated tokens, but the second term also suppresses faithful factual tokens.
Since hallucinated tokens are typically sparse, this spurious penalty can dominate the faithfulness-driven update, reducing informativeness or encouraging overly conservative responses.
This motivates moving beyond response-level rewards toward fine-grained hallucination feedback.

\subsection{Fine-Grained Hallucination Feedback and the Remaining Assignment Gap}
\label{sec:fine_grained_gap}

A natural remedy to response-level credit mismatch is to introduce finer-grained hallucination feedback.
Prior work has explored claim-level correctness, atomic fact verification, statement-level faithfulness assessment, and dense process supervision for faithfulness alignment~\citep{lightman2023let, chen2025learning, dou2026baichuan, ren2025knowrl, yin2026mitigating}.
Recent RL-based hallucination mitigation methods further move toward token-level optimization.
FSPO adjusts token-level advantages using step-wise faithfulness verification~\citep{li2026reasoning}, while RLFH decomposes responses into atomic facts and converts statement-level truthfulness and informativeness feedback into token-level dense rewards~\citep{wen2025policy}.
We provide a broader discussion of fine-grained hallucination mitigation and process supervision in Appendix~\ref{sec:related}.

However, fine-grained feedback alone does not determine how policy-gradient credit should be assigned.
Penalty-only token-level feedback may encourage shorter or less informative responses, while assigning fixed positive rewards to all faithful factual tokens may bias the model toward verbose factual padding.
Similarly, token-level updates whose magnitudes remain tied to response-level advantages can still inherit noise from global response-level rewards.

These limitations motivate a balanced token-level assignment principle: hallucinated factual tokens should receive localized negative feedback, faithful factual tokens should receive compensatory positive feedback, and the total faithfulness-driven signal should be balanced within each response.
In the next section, we instantiate this principle as a balanced token-level policy optimization framework for hallucination mitigation.

\section{Methodology}

\begin{figure}
\vspace{-10pt}
    \centering
    \includegraphics[width=1\linewidth]{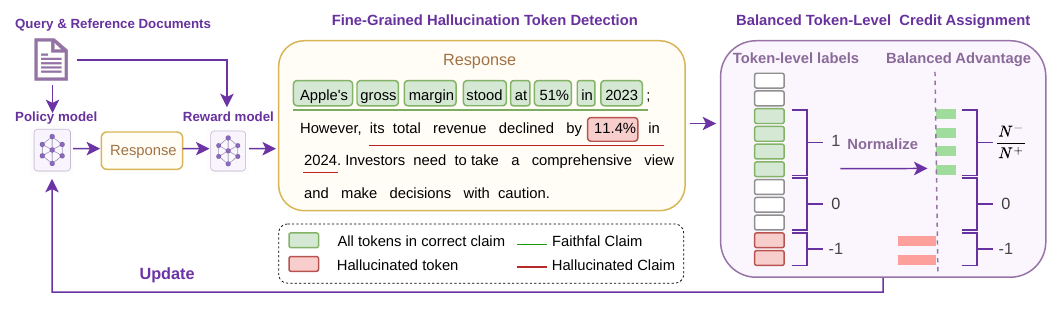}
    \caption{Overview of the BALTO framework is demonstrated. Initially, the reward model localize hallucinations at the token level. Subsequently, the balanced credit mechanism assigns token-level advantages: hallucinated tokens (red) receive -1 advantage , faithful factual tokens (green) receive positive compensatory advantages of $N^-/N^+$ , where $N^-$ and $N^+$ are the numbers of hallucinated and faithful tokens respectively, and neutral tokens receive 0 advantage (white). }
    \label{fig:balto}
    \vspace{-15pt}
\end{figure}

We propose \textbf{BALTO}: \textbf{Bal}anced \textbf{T}oken-level Policy \textbf{O}ptimization, a fine-grained reinforcement learning framework for hallucination mitigation.
As illustrated in Figure~\ref{fig:balto}, BALTO consists of two stages.
First, an LLM-based faithfulness evaluator extracts and verifies checkable factual claims from each sampled response.
Second, BALTO projects claim-level judgments to token-level labels, constructs balanced token-level credits, and optimizes the policy with a GRPO-style clipped objective.

\subsection{Fine-Grained Hallucination Token Detection}
\label{sec:detection}

Given an input $x=(q,d)$ and a sampled response
$y_i=(y_{i,1},\ldots,y_{i,T_i})$, BALTO first identifies checkable factual claims in the response.
A claim is defined as a minimal text span that expresses a verifiable factual assertion and can be independently checked against the reference document.
We represent the extracted claims as $\mathcal{C}_i=\{c_{i,m}\}_{m=1}^{M_i}$, where each $c_{i,m}\subseteq\{1,\ldots,T_i\}$ denotes the token indices covered by the $m$-th claim.
In practice, we focus on claims involving named entities, numerical values, dates, temporal expressions, attributes, and relations, which are common sources of faithfulness errors and align with atomic faithfulness evaluation~\citep{pagnoni2021understanding,nan2021entity,min2023factscore}.
The faithfulness evaluator verifies each claim against the reference document $d$ and partitions the claims into faithful and hallucinated claim sets: 
\begin{align*}
\mathcal{C}_i^{+}
&=
\{c_{i,m}\in\mathcal{C}_i:
c_{i,m}\text{ is supported by }d\}, \\
\mathcal{C}_i^{-}
&=
\{c_{i,m}\in\mathcal{C}_i:
c_{i,m}\text{ is unsupported or contradicted by }d\}.
\end{align*}

We then project claim-level verification results back to token-level labels.
For faithful claims, the projection is straightforward: all tokens covered by the claim are labeled as faithful factual tokens.
For hallucinated claims, however, assigning negative feedback to the entire claim span can be overly coarse, since a claim may contain only a small erroneous part.
Therefore, for each hallucinated claim $c_{i,m}\in\mathcal{C}_i^{-}$, BALTO further localizes a minimal erroneous token subset $e_{i,m}\subseteq c_{i,m}$, where $e_{i,m}$ contains the tokens directly responsible for the factual error, such as an incorrect entity, number, date, attribute, or relation.
If the erroneous part cannot be reliably isolated, we conservatively set $e_{i,m}=c_{i,m}$. Formally, the token-level label is defined as
\begin{equation}
z_{i,t}
=
\begin{cases}
-1,
& \exists c_{i,m}\in\mathcal{C}_i^{-}
\text{ such that } t\in e_{i,m},\\[3pt]
+1,
& \exists c_{i,m}\in\mathcal{C}_i^{+}
\text{ such that } t\in c_{i,m}
\text{ and } t\notin \bigcup_{c_{i,m'}\in\mathcal{C}_i^{-}} e_{i,m'},\\[3pt]
0,
& \text{otherwise}.
\end{cases}
\label{eq:token_label_projection}
\end{equation}
Here, $z_{i,t}=-1$ indicates that token $y_{i,t}$ is directly responsible for a hallucinated factual claim, $z_{i,t}=+1$ indicates that it belongs to a faithful factual claim, and $z_{i,t}=0$ indicates that it is either outside checkable factual claims or is a faithful token inside a hallucinated claim.
When a token is included in both faithful and hallucinated projections due to overlapping claims, the hallucination label takes precedence.

Finally, we define the hallucinated factual token set, faithful factual token set, and neutral token set as
\begin{equation}
\mathcal{H}_i=\{t\mid z_{i,t}=-1\},
\qquad
\mathcal{F}_i=\{t\mid z_{i,t}=+1\},
\qquad
\mathcal{N}_i=\{t\mid z_{i,t}=0\}.
\label{eq:token_sets}
\end{equation}

This projection differs from statement-level reward mapping, which typically assigns feedback to a whole matched statement or sentence~\citep{wen2025policy}.
BALTO instead localizes the factual error within a hallucinated claim and applies negative feedback only to the minimal erroneous tokens.
For the hallucinated claim in Figure~\ref{fig:balto} that ``total revenue declined by 11.4\% in 2024'', BALTO penalizes the token span corresponding to ``11.4\%'' rather than the entire claim.
This provides precise negative supervision while avoiding unnecessary suppression of surrounding faithful context.
The resulting token sets $\mathcal{H}_i$, $\mathcal{F}_i$, and $\mathcal{N}_i$ are then used to construct balanced token-level advantages.

\subsection{Balanced Token-Level Credit Assignment}

BALTO directly assigns token-level advantages based on the factual status of tokens.
For response $y_i$, let $N_i^{-}=|\mathcal{H}_i|$, $N_i^{+}=|\mathcal{F}_i|$ denote the number of hallucinated factual tokens and faithful factual tokens, respectively.
We define the balanced token-level advantage as
\begin{equation}
A_{i,t}^{\mathrm{bal}}
=
\begin{cases}
-1,
& t\in \mathcal{H}_i, \\[4pt]
\dfrac{N_i^{-}}{N_i^{+}},
& t\in \mathcal{F}_i,\ N_i^{-}>0,\ N_i^{+}>0, \\[10pt]
0,
& t\in \mathcal{N}_i \text{ or } N_i^{-}=0.
\end{cases}
\label{eq:balanced_advantage}
\end{equation}
Thus, hallucinated factual tokens receive a negative advantage, faithful factual tokens receive a positive compensation advantage, and tokens outside checkable factual claims do not contribute to faithfulness-driven optimization~\footnote{For faithful responses, where $N_i^{-}=0$, all faithfulness-driven advantages are set to zero, avoiding an incentive to generate unnecessarily long responses containing redundant safe claims.
If $N_i^{-}>0$ but $N_i^{+}=0$, BALTO applies only the negative correction, since no faithful factual token is available for within-response compensation.}.
When both hallucinated and faithful factual tokens exist, BALTO satisfies an intra-response balance property:
\begin{equation}
\sum_{t=1}^{T_i} A_{i,t}^{\mathrm{bal}}
=
\sum_{t\in\mathcal{H}_i}(-1)
+
\sum_{t\in\mathcal{F}_i}
\frac{N_i^{-}}{N_i^{+}}
=
-N_i^{-}
+
N_i^{+}\frac{N_i^{-}}{N_i^{+}}
=
0.
\label{eq:balance_property}
\end{equation}
Therefore, the total negative advantage assigned to hallucinated factual tokens is exactly matched by the total positive advantage assigned to faithful factual tokens.
This encourages the model to redistribute probability mass from hallucinated factual content to supported factual content, rather than suppressing the entire response.

\textbf{BALTO Objective} Given an input $x$, we sample a group of $K$ responses $\{y_i\}_{i=1}^{K}$ from the old policy $\pi_{\theta_{\mathrm{old}}}$.
For each response, we obtain token sets $\mathcal{H}_i$, $\mathcal{F}_i$, and $\mathcal{N}_i$ using the faithfulness evaluator, and assign balanced advantages according to Eq.~\ref{eq:balanced_advantage}.
The BALTO objective is
{\small
\begin{align}
&\mathcal{J}_{\mathrm{BALTO}}(\theta)
=
\mathbb{E}_{\{y\}^K \sim \pi_{\theta_{old}}(\cdot|x)} \notag \\
&\left[
\frac{1}{K}
\sum_{i=1}^{K}\frac{1}{Z_i}
\sum_{t\in\mathcal{H}_i\cup\mathcal{F}_i}
\min
\left(
\frac{\pi_\theta(y_{i,t} \mid x, y_{i,<t})}{\pi_{\theta_{\mathrm{old}}}(y_{i,t} \mid x, y_{i,<t})} A_{i,t}^{bal},
\operatorname{clip}(\frac{\pi_\theta(y_{i,t} \mid x, y_{i,<t})}{\pi_{\theta_{\mathrm{old}}}(y_{i,t} \mid x, y_{i,<t})}), 1-\epsilon, 1+\epsilon) A_{i,t}^{bal}
\right)
\right]
.
\label{eq:balto_objective}
\end{align}
}
where
\begin{equation}
Z_i=\max\left(1,|\mathcal{H}_i|+|\mathcal{F}_i|\right)
\end{equation}
normalizes the update over factual tokens.
Since tokens in $\mathcal{N}_i$ have zero advantage, they do not contribute to the faithfulness-driven update.

Compared with response-level GRPO, BALTO provides localized and balanced supervision even when hallucinations are widespread at the response level.
Compared with unbalanced token-level rewards, BALTO explicitly controls the total positive and negative advantage mass within each response.
As a result, the policy receives precise corrective signals for hallucinated content while preserving response informativeness.

\section{Theoretical Analysis}
\label{sec:theory}

In this section, we formally demonstrate the theoretical advantages of BALTO over standard response-level GRPO. Specifically, we prove how BALTO's fine-grained formulation guarantees highly stable training trajectories and maximizes optimization efficiency across all training stages. 

To evaluate the optimization dynamics, we model the response-level reward $r$ in GRPO as a Bernoulli variable: $r \sim \mathcal{B}(p)$, where $p$ is the probability for generating a faithful response. Here, $r=1$ denotes a faithful response ($y^+$) and $r=0$ denotes a hallucinated response ($y^-$). Therefore the mean and std of reward $r$ are $p$ and $\sqrt{p(1-p)}$. According to Section \ref{sec:response_level_rl}, GRPO utilizes a globally normalized advantage $A = (r - p)/\sqrt{p(1-p)}$. In contrast, BALTO applies balanced local advantages $A_{t}^{\mathrm{bal}}$ as defined in Eq.~\ref{eq:balanced_advantage}, satisfying the intra-response zero-sum property (Eq.~\ref{eq:balance_property}): $\sum_{t=1}^T A_{t}^{\mathrm{bal}} = 0$.

Let a generated response $\mathbf{y}$ of length $T$ be partitioned into three disjoint sets as defined in Eq.~\ref{eq:token_sets}: the hallucinated factual token set $\mathcal{H}$ (size $N^-$), the faithful factual token set $\mathcal{F}$ (size $N^+$), and the neutral token set $\mathcal{N}$. We define the per-token gradient operator as $g_t = \nabla_\theta \log \pi_\theta(y_t | \mathbf{y}_{<t}, x)$.

% ---------------------------------------------------------
\subsection{Enhancing Training Stability via Variance Reduction}
\label{subsec:stability_variance}

A major cause of instability in response-level RL is the high variance induced by global rewards, where faithful and neutral tokens receive noisy updates simply due to their co-occurrence with hallucinated tokens. By assigning zero advantages to neutral tokens and distributing a weighted compensation over faithful tokens, BALTO filters out gradient noise from the vast majority of tokens, thereby ensuring the high stability of the optimization trajectory.

\begin{theorem}[Stable Convergence via Variance Reduction]
\label{thm:stability}
Assuming local token gradients are approximately independent with covariance $\Sigma$. For response-level GRPO, the variance of the gradient estimator scales linearly with the total response length $T$: $\mathrm{Var}(\hat{\nabla} J_{\mathrm{resp}}) = \mathcal{O}(T\Sigma)$. In contrast, BALTO restricts the variance to scale exclusively with the number of hallucinated tokens: $\mathrm{Var}(\hat{\nabla} J_{\mathrm{BALTO}}) = \mathcal{O}(N^-\Sigma)$. Thus, since hallucinated tokens are sparse, BALTO substantially reduces gradient variance compared with response-level optimization.
\end{theorem}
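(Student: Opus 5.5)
The plan is to write both gradient estimators as weighted sums of the per-token scores $g_t$, then use the stated independence assumption so that the variance becomes a sum of squared weights times $\Sigma$. Clipping and the ratio $\pi_\theta/\pi_{\theta_{\mathrm{old}}}$ are ignored: I evaluate at $\theta=\theta_{\mathrm{old}}$, where the ratio is $1$ and clipping is inactive. The length normalizers $1/T$ and $1/Z$ are also dropped and compared at the level of summed token contributions, since the statement concerns scaling in $T$ and $N^-$. Concretely, I take $\hat\nabla J_{\mathrm{resp}} = A\sum_{t=1}^T g_t$ and $\hat\nabla J_{\mathrm{BALTO}} = \sum_{t\in\mathcal{H}\cup\mathcal{F}} A^{\mathrm{bal}}_t g_t$. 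The modeling assumption is that $\{g_t\}$ are approximately independent with $\mathrm{Cov}(g_t)=\Sigma$, and that the token-level advantages depend on the labels but not on the noise in $g_t$.

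\textbf{Response-level GRPO.} Condition on the reward $r$, so that $A$ is a fixed scalar. Then $\mathrm{Var}(A\sum_t g_t\mid r)=A^2 T\Sigma$. Averaging over $r\sim\mathcal{B}(p)$ gives $\mathbb{E}[A^2]=\mathbb{E}[(r-p)^2]/(p(1-p))=1$. This yields $\mathbb{E}[\mathrm{Var}(\cdot\mid r)]=T\Sigma$. The law of total variance adds a nonnegative between-reward term, $\mathrm{Var}(\mathbb{E}[\cdot\mid r])$, so the total is at least $T\Sigma$. Under the usual bounded-mean assumption on $\mathbb{E}[g_t]$, that term is also $\mathcal{O}(T\Sigma)$, or can be absorbed into it. Hence $\mathrm{Var}(\hat\nabla J_{\mathrm{resp}})=\Theta(T\Sigma)$.

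\textbf{BALTO.} Neutral tokens have zero weight and so drop out completely. Hallucinated tokens contribute $\sum_{t\in\mathcal{H}}(-1)^2\Sigma = N^-\Sigma$. Faithful tokens contribute
\[
\sum_{t\in\mathcal{F}}\Big(\frac{N^-}{N^+}\Big)^2\Sigma=\frac{(N^-)^2}{N^+}\Sigma .
\]
The sum of squared weights is therefore $N^-(1+N^-/N^+)\Sigma$. If $N^+=0$ only the first term appears, and if $N^-=0$ the estimator is zero. The key point is that the balance property (Eq.~\ref{eq:balance_property}) fixes the total positive mass at $N^-$, while spreading it over $N^+$ tokens shrinks its squared contribution.

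\textbf{Main obstacle.} The hardest step is bounding the factor $N^-/N^+$ so that the result is $\mathcal{O}(N^-\Sigma)$. I would assume, consistent with the pilot study in Section~\ref{sec:pilot}, that hallucinated tokens are no more numerous than faithful factual tokens, i.e.\ $N^-\le N^+$. Then $(N^-)^2/N^+\le N^-$, and the total is at most $2N^-\Sigma$. Without this assumption I would state the sharper bound $N^-(1+N^-/N^+)\Sigma$ and note that it is still at most $(N^-+N^+)\Sigma\le T\Sigma$, because $(N^-)^2/N^+\le N^-+N^+$ fails only when $N^-$ greatly exceeds $N^+$.

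Finally, I would compare the two bounds. Since $N^-\ll T$, with typical ratios around $2.77\%$ (Section~\ref{sec:pilot}), BALTO's variance is smaller by a factor of order $N^-/T$.
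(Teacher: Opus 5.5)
Your proposal follows the paper's proof. Both arguments use orthogonality of the token scores to reduce the second moment of each estimator to $\bigl(\sum_t w_t^2\bigr)\Sigma$, where $w_t$ are the token weights. Both use $\mathbb{E}[A^2]=1$ to get $T\Sigma$ for GRPO, obtain $N^-+(N^-)^2/N^+$ for BALTO, and invoke sparsity (your $N^-\le N^+$, the paper's $N^-\ll N^+$) to conclude $\mathcal{O}(N^-\Sigma)$; the paper additionally keeps the factor $1-p$, since faithful responses contribute nothing.

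Two side points need fixing. First, your fallback ``without this assumption'' is false: $N^-+(N^-)^2/N^+\le N^-+N^+$ is equivalent to $N^-\le N^+$. For example, $N^+=1$, $N^-=10$, $T=11$ gives $110>T$. Second, your claim that the between-reward term from the law of total variance is $\mathcal{O}(T\Sigma)$ is unjustified, since bounded per-token conditional means only give $\mathcal{O}(T^2)$. That step is also unnecessary. If you apply to $A$ the same independence from score noise that you assume for $A^{\mathrm{bal}}_t$ (as the paper implicitly does when it factors out $\mathbb{E}[A^2]$), the term vanishes and you get exactly $T\Sigma$.
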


\begin{proof}[Proof Sketch]
In response-level GRPO, the same normalized advantage 
$A$ is applied to every token in the response. Therefore, all token gradients contribute to the estimator variance: $\mathrm{Var}(\hat{\nabla} J_{\mathrm{resp}}) = \mathbb{E}[A^2] \sum_{t=1}^T \mathrm{Var}(g_t) = T \Sigma$. 
For BALTO, since $\mathbb{E}[g_t]=0$, the variance formulation restricts the weights to the assigned token-level advantages: $\mathrm{Var}(\hat{\nabla} J_{\mathrm{BALTO}}) \propto (\sum_{t \in \mathcal{H}}(-1)^2 + \sum_{t \in \mathcal{F}} (N^-/N^+)^2 + \sum_{t \in \mathcal{N}} 0^2)\Sigma = (N^- + (N^-)^2/N^+)\Sigma$. Since hallucinations are sparse relative to faithful content ($N^- \ll N^+$), the fraction $(N^-)^2/N^+ < N^-$. The overall variance is thus strictly compressed to $\mathcal{O}(N^- \Sigma)$.This shows that BALTO compresses the variance from scaling with the full sequence length $T$ to scaling only with the number of hallucinated tokens $N^-$. Consequently, BALTO provides a more stable training signal, especially when hallucinations are sparse. (Detailed derivation in Appendix \ref{proof1}).
\end{proof}

% ---------------------------------------------------------
\subsection{Maximizing Optimization Efficiency across Training Stages}
\label{subsec:efficiency_asymptotic}

As training progresses, the probability of generating a faithful response, denoted by $p$, gradually increases. An efficient optimization method should provide strong and reliable gradients throughout this process. However, standard global normalization suffers from gradient starvation during early training (low $p$) and disproportionate penalties during late-stage convergence (high $p$). BALTO resolves both extremes by using bounded token-level advantages, guaranteeing efficient optimization across all stages.

\begin{theorem}[High Optimization Efficiency in Asymptotic Regimes]
\label{thm:efficiency}
As $p \to 0$, response-level GRPO suffers from vanishing expected gradients ($||\mathbb{E}[\hat{\nabla} J_{\mathrm{resp}}]|| \to 0$), severely limiting early-stage learning efficiency. In contrast, BALTO maintains strong local learning signals to ensure efficient exploration. As $p \to 1$, response-level GRPO triggers unbounded divergent penalties ($A^- \to -\infty$), destroying late-stage optimization efficiency. BALTO, however, maintains strictly bounded updates ($A_t^{\mathrm{bal}} \in[-1, 1]$), ensuring efficient and stable final convergence.
\end{theorem}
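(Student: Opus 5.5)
The plan is to work entirely inside the Bernoulli reward model of Section~\ref{sec:theory}, with $r\sim\mathcal{B}(p)$ and $A=(r-p)/\sqrt{p(1-p)}$. I will treat the two regimes $p\to 0$ and $p\to 1$ separately, and in each compare the GRPO advantages and expected gradient with the BALTO advantages of Eq.~\ref{eq:balanced_advantage}.

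\textbf{Step 1: explicit GRPO advantages.} Direct substitution gives the two values of the advantage:
\begin{equation*}
A^+=\sqrt{(1-p)/p}\quad\text{for } y^+,\qquad A^-=-\sqrt{p/(1-p)}\quad\text{for } y^-.
\end{equation*}
Write $G^\pm=\mathbb{E}\bigl[\sum_t g_t\mid y^\pm\bigr]$ for the conditional expected summed score in each class. The expected GRPO gradient is then
\begin{equation*}
\mathbb{E}[\hat\nabla J_{\mathrm{resp}}]=pA^+G^+ + (1-p)A^-G^- = \sqrt{p(1-p)}\,(G^+-G^-).
\end{equation*}

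\textbf{Step 2: the regime $p\to 0$.} Assuming $\|G^\pm\|$ is bounded (bounded score functions and bounded length $T$), the prefactor $\sqrt{p(1-p)}$ drives the expected gradient to zero. In the finite-group reading, the probability that a group of $K$ rollouts is all hallucinated is $(1-p)^K\to1$. Such a group has zero variance, so the update vanishes, which is consistent with the limit above.

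For BALTO I will show that a hallucinated rollout with $N^->0$ carries a nonzero signal $-\sum_{t\in\mathcal{H}}g_t+\frac{N^-}{N^+}\sum_{t\in\mathcal{F}}g_t$. This signal does not depend on $p$, because the advantages are set per response and not by group normalization. Its expectation is weighted by $(1-p)\to1$, so it stays bounded away from zero. This holds as long as the local score terms on $\mathcal{H}$ and $\mathcal{F}$ are not degenerate, which I will add as a mild nondegeneracy assumption.

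\textbf{Step 3: the regime $p\to 1$.} In the normalized form, $A^-=-\sqrt{p/(1-p)}\to-\infty$. The per-sample update magnitude on a rare hallucinated response is therefore unbounded, even though its probability $1-p$ shrinks. For BALTO, $A_t^{\mathrm{bal}}\in\{-1,0,N^-/N^+\}$. The paper's sparsity regime $N^-\le N^+$ gives $N^-/N^+\le1$, so $A_t^{\mathrm{bal}}\in[-1,1]$. With the $1/Z_i$ normalization, the per-sample update is then bounded by $\max_t\|g_t\|$ uniformly in $p$. Clipping only shrinks this bound further.

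\textbf{Main obstacle.} The delicate point is the upper bound $N^-/N^+\le1$. It is not implied by Eq.~\ref{eq:balanced_advantage} alone. I would either state it as an assumption ($N^-\le N^+$, matching the pilot statistics) or note that in general the bound becomes $\max(1,N^-/N^+)\le T$. That weaker bound is still independent of $p$, which is all the divergence comparison needs.

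A second subtlety is that ``vanishing expected gradient'' for GRPO needs the conditional gradients $G^\pm$ to stay bounded as $p$ changes. I will state this boundedness explicitly as a regularity assumption.
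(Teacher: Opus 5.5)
\textbf{This is the paper's route, and it shares the paper's gap in the BALTO half of the $p\to0$ regime.}

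\textbf{What matches the paper and holds up.} You use the same Bernoulli expansion $\mathbb{E}[\hat{\nabla} J_{\mathrm{resp}}]=\sqrt{p(1-p)}\,(G^+-G^-)$. You compare it against BALTO's $(1-p)$ prefactor, use $A^-=-\sqrt{p/(1-p)}\to-\infty$, and appeal to sparsity $N^-\le N^+$ for $A_t^{\mathrm{bal}}\in[-1,1]$. Your boundedness assumption on $G^\pm$ makes the GRPO half of the $p\to0$ claim sound. The whole $p\to1$ claim is also sound, within the population-normalized model. Both are argued more carefully than the paper's bare ``$\propto$''. Your ``main obstacle'' is also milder than you fear. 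The $1/Z_i$-normalized update has norm at most $\frac{2N^-}{N^-+N^+}\max_t\|g_t\|\le 2\max_t\|g_t\|$, with no assumption on $N^-/N^+$.

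\textbf{The gap.} The sentence ``this signal does not depend on $p$'' conflates two things. The advantages are indeed $p$-free. The signal $\sum_t A_t^{\mathrm{bal}}g_t$ is not: its conditional mean given $y^-$ moves with $\theta$ exactly as $p$ does. So the nondegeneracy you call mild is the entire content of the claim, and it fails in the simplest instance.

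Take a one-token response with softmax logits over $\{h,f\}$ and $\pi(f)=p$. GRPO gives $\mathbb{E}[\hat{\nabla} J_{\mathrm{resp}}]=\sqrt{p(1-p)}\,(e_f-e_h)$. BALTO assigns $A=-1$ on $h$ and $A=0$ on $f$, giving $-(1-p)(e_h-\pi)=p(1-p)\,(e_f-e_h)$. This vanishes faster than GRPO's.

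The mechanism is general:
\begin{itemize}
\item Since $\mathbb{E}[G]=0$ and $\mathbb{E}[rG]=\nabla_\theta p$, GRPO's expected gradient is exactly $\nabla_\theta p/\sqrt{p(1-p)}$. It vanishes only because $\nabla_\theta p$ does when hallucination saturates.
\item Suppose the label of $y_t$ is determined by $y_{\le t}$. Then the $\mathcal{H}$-part of BALTO's expected gradient is $-\sum_t\mathbb{E}[\nabla_\theta q_t]$. Here $q_t$ is the step-$t$ probability of emitting a hallucinated token given $y_{<t}$, differentiated with the prefix held fixed.
\item In logit coordinates, $\|\nabla q_t\|\le\sqrt{2}\,q_t(1-q_t)$, which vanishes under exactly the same saturation.
\end{itemize}
Comparing the bare prefactors $\sqrt{p(1-p)}$ and $(1-p)$ therefore proves nothing about expected gradients.

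\textbf{What your argument does establish.} Consider an all-hallucinated group, which occurs with probability $(1-p)^K\to1$. GRPO gives it identically zero advantages, while BALTO still places $-1$ on every hallucinated token. That advantage-level statement is true, but it is weaker than the expected-gradient claim you set out to prove.
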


\begin{proof}[Proof Sketch]
For response-Level GRPO, the expected gradient magnitude is bounded by the normalization coefficient $\sqrt{p(1-p)}$. $\lim_{p \to 0} ||\mathbb{E}[\hat{\nabla} J_{\mathrm{resp}}]|| \propto \lim_{p \to 0} \sqrt{p} = 0$, which leads to gradient starvation when faithful responses are rare. As $p \to 1$, the penalty $A^- = -\sqrt{p/(1-p)} \to -\infty$, causing divergent updates. In contrast, BALTO's expected gradient magnitude is proportional to $(1-p)$. As $p \to 0$, $(1-p) \to 1$, so BALTO still provides dense and strong correction signals. As $p \to 1$, $(1-p) \to 0$, naturally decaying the gradient. Meanwhile, local advantages $A_t^{\mathrm{bal}}$ remain strictly bounded in $[-1, 1]$ preventing instability. (Detailed derivation in Appendix \ref{proof2}).
\end{proof}

\section{Experiments}
\subsection{Experimental Settings}

\textbf{Datasets.}
We evaluate our method on three benchmarks that cover various hallucination scenarios. 
\textbf{ConFiQA} \citep{bi2025context} is a counterfactual QA dataset constructed from conflicting evidence, and we focus on the most challenging Multi-Conflicts (MC) subset. 
\textbf{RAGTruth} \citep{niu2024ragtruth} is a RAG hallucination corpus with diverse data sources and task formats. 
\textbf{FinLLM-Eval} \citep{finLLM-Eval} is a financial domain QA dataset containing large amounts of financial facts and numerical information. We evaluate our method on FinLLM-Eval, a high-stakes benchmark that demands precise answers, as the financial domain is particularly sensitive to hallucinations: even minor factual or numerical errors can lead to misleading analyses and consequential decision-making risks.

\textbf{Baselines.}
We use Qwen3-4B and Qwen3-8B \citep{yang2025qwen3} as base models and compare with three categories of methods discussed in Appendix~\ref{sec:related}. 
(1) \textbf{Data-driven methods}: we construct preference pairs and train models using SFT and DPO. 
% Due to limited performance, we report results only on Qwen3-8B. 
(2) \textbf{Response-level RL}: we implement GRPO \citep{shao2024deepseekmath} with two reward designs, including a binary hallucination reward (GRPO$_B$) and a dense reward based on the proportion of correct claims (GRPO$_D$). Most existing response-level hallucination optimization methods can be viewed as variants of GRPO$_D$. 
(3) \textbf{Token-level RL}: we compare with FSPO \citep{li2026reasoning}, which performs token-level advantage reshaping on the response level reward. We do not directly compare with RLFH \citep{wen2025policy}, a PPO-based token-level method, due to its discrepancy on the self-assessment setting. Nevertheless, we perform RLFH-style experiments with its core components and conduct an ablation study in Section~\ref{sec:analysis}.

\textbf{Evaluation.}
Following prior work~\citep{min2023factscore, yin2026mitigating}, we use an LLM-based evaluation pipeline to extract claims from model responses and verify their faithfulness against reference evidence. 
We report \textbf{faithfulness} (Faith.), which measures the proportion of responses without hallucinations, and \textbf{informativeness} (Info.), which evaluates whether the response preserves sufficient information compared to a reference response generated by the base model. We define \textbf{Q-Score} = Faith. $\times$ Info. to assert the comprehensive level of faithfulness and informativeness. To further evaluate the response helpfulness, for ConFiQA, we additionally report answer \textbf{accuracy} (Acc.). For RAGTruth and FinLLM-Eval, following~\citep{li2026knowledgelevel, chen2025learning}, we conduct pairwise comparisons between RL baselines and our BALTO by reporting the \textbf{win rate} (WR). 
More evaluation details are provided in Appendix~\ref{app:evaluation}.

\textbf{Implementation Details.}
We implement our method and the RL baselines using the verl \citep{sheng2024hybridflow} framework. 
Models are trained on the training split of each dataset and evaluated on the test set. 
When training, we use a batch size of 256 and a mini-batch size of 64, with 8 rollouts per prompt. 
The learning rate is set to $1 \times 10^{-6}$. 
Following DAPO \citep{yu2025dapo}, we use asymmetric clipping with a lower ratio of 0.2 and an upper ratio of 0.28. More details are in Appendix \ref{app:implementation-details}.

\subsection{Main Results}

\begin{table*}[htpb]
\centering
\caption{Main results (\%) on ConFiQA, RAGTruth, and FinLLM-Eval using Qwen3-8B and Qwen3-4B. \textbf{Bold} and \underline{underlined} scores represent the best and second-best results. \textit{[ref.]} denotes the reference method for relative metrics, with Info. computed against Base and WR computed against BALTO.}
\label{tab:main_results}
\resizebox{\textwidth}{!}{
\begin{tabular}{llcccccccccccc}
\toprule
\multirow{2}{*}{Model} 
& \multirow{2}{*}{Method}
& \multicolumn{4}{c}{ConFiQA} 
& \multicolumn{4}{c}{RAGTruth} 
& \multicolumn{4}{c}{FinLLM-Eval} \\
\cmidrule(lr){3-6} \cmidrule(lr){7-10} \cmidrule(lr){11-14}
& & Faith. & Info. & Q-Score & Acc.
  & Faith. & Info. & Q-Score & WR
  & Faith. & Info. & Q-Score & WR \\
\midrule
\multirow{8}{*}{Qwen3-8B}
& Base & 68.2 & \textit{[ref.]} & 68.2 & 70.6 & 40.4 & \textit{[ref.]} & 40.4 & - & 34.6 & \textit{[ref.]} & 34.6 & - \\
& \gc{SFT}  & \gc{71.2} & \gc{68.0} & \gc{48.4} & \gc{71.6} & \gc{56.2} & \gc{66.7} & \gc{37.5} & \gc{-} & \gc{48.1} & \gc{\underline{70.7}} & \gc{34.0} & \gc{-} \\
& \gc{DPO}  & \gc{72.8} & \gc{84.0} & \gc{61.2} & \gc{62.0} & \gc{52.4} & \gc{11.6} & \gc{6.1} & \gc{-} & \gc{41.4} & \gc{60.2} & \gc{24.9} & \gc{-} \\
% \cdashline{2-14}
& GRPO$_{B}$ & 92.8 & 93.5 & 86.8 & 73.8 & 89.8 & 55.6 & 49.9 & 45.1 & 90.2 & 39.1 & 35.3 & 44.0 \\
& GRPO$_{D}$  & 94.5 & 95.2 & 90.0 & \underline{74.4} & \underline{92.0} & \underline{73.8} & \underline{67.9} & 37.2 & \underline{91.7} & 51.1 & 46.9 & 37.2 \\
& \gc{FSPO} & \gc{\underline{94.8}} & \gc{\textbf{97.3}} & \gc{\underline{92.2}} & \gc{66.4} & \gc{73.1} & \gc{\textbf{84.9}} & \gc{62.1} & \gc{34.2} & \gc{59.4} & \gc{\textbf{82.0}} & \gc{\underline{48.7}} & \gc{46.6} \\
& BALTO & \textbf{97.5} & \underline{96.2} & \textbf{93.8} & \textbf{80.2} & \textbf{98.4} & 71.1 & \textbf{70.0} & \textit{[ref.]} & \textbf{92.5} & 57.9 & \textbf{53.6} & \textit{[ref.]} \\
\midrule
\multirow{6}{*}{Qwen3-4B}
& \gc{Base} & \gc{80.3} & \gc{\textit{[ref.]}} & \gc{80.3} & \gc{36.8} & \gc{62.2} & \gc{\textit{[ref.]}} & \gc{\textbf{62.2}} & \gc{-} & \gc{57.1} & \gc{\textit{[ref.]}} & \gc{\textbf{57.1}} & \gc{-} \\
& GRPO$_{B}$ & \underline{97.8} & \underline{96.3} & 94.2 & 36.2 & 89.8 & 21.3 & 19.1 & 39.8 & \underline{82.0} & 23.3 & 19.1 & 38.8 \\
& GRPO$_{D}$  & 88.0 & 92.8 & 81.7 & \underline{37.8} & \underline{93.1} & 50.9 & 47.4 & 35.2 & 78.2 & \textbf{43.6} & 34.1 & 57.7 \\
& \gc{FSPO} & \gc{\underline{94.8}} & \gc{\textbf{97.3}} & \gc{\underline{92.2}} & \gc{66.4} & \gc{73.1} & \gc{\textbf{84.9}} & \gc{62.1} & \gc{34.2} & \gc{59.4} & \gc{\textbf{82.0}} & \gc{\underline{48.7}} & \gc{46.6} \\
& BALTO & \textbf{99.0} & 96.2 & \textbf{95.2} & \textbf{38.6} & \textbf{94.0} & \underline{56.7} & \underline{53.3} & \textit{[ref.]} & \textbf{88.7} & 42.1 & \underline{37.3} & \textit{[ref.]} \\
\bottomrule
\end{tabular}
}
\vspace{-15pt}
\end{table*}

Table~\ref{tab:main_results} reports the main results on ConFiQA, RAGTruth, and FinLLM-Eval. 
Overall, BALTO achieves the highest faithfulness across all datasets and both model scales, and obtains the best Q-Score among all post-training methods. 
Compared with the strongest post-training baseline in each setting, BALTO improves faithfulness by 3.1 points on average and up to 6.7 points, while improving Q-Score by 2.7 points on average and up to 4.9 points. 
These gains are particularly pronounced over response-level RL methods (GRPO$_B$ and GRPO$_D$), suggesting that balanced token-level credit assignment provides more precise optimization signals for localized hallucinations.

Although FSPO achieves higher informativeness on some datasets, it often does so at the cost of faithfulness, leading to a weaker overall faithfulness--informativeness trade-off. 
In contrast, BALTO consistently improves faithfulness while maintaining competitive informativeness, indicating that it suppresses unsupported content without simply shortening or over-conservatizing responses. 
Moreover, BALTO attains the highest accuracy on ConFiQA for both model scales, improving over the strongest post-training baseline by up to 5.8 points, and serves as the reference in most pairwise response comparisons on RAGTruth and FinLLM-Eval. 
These results show that BALTO preserves overall response quality while achieving stronger hallucination mitigation.

\subsection{Further Analysis}
\label{sec:analysis}
\begin{table}[htbp]
    \vspace{-15pt}
\centering
\caption{Ablation study on positive advantage assignment. ``PA = 0'' removes the positive advantage for faithful tokens, and ``PA = 0.3'' / ``PA = 1'' assign fixed positive advantage values.}
\label{tab:ablation_pos_reward_final}
\small{
\begin{tabular}{lccccccccc}
\toprule
\multirow{2}{*}{Method} 
& \multicolumn{3}{c}{ConFiQA} 
& \multicolumn{3}{c}{RAGTruth} 
& \multicolumn{3}{c}{FinLLM-Eval} \\
\cmidrule(lr){2-4} \cmidrule(lr){5-7} \cmidrule(lr){8-10}
& Faith. & Info. & Q-Score 
& Faith. & Info. & Q-Score 
& Faith. & Info. & Q-Score \\
\midrule
BALTO 
& \textbf{97.5} & \textbf{96.2} & \textbf{93.8} 
& \textbf{98.4} & 71.1 & \textbf{70.0} 
& \textbf{92.5} & 57.9 & \textbf{53.6} \\
PA = 0
& 91.3 & 92.2 & 84.2 
& 86.0 & 64.0 & 55.0 
& 78.2 & 36.1 & 28.2 \\
PA = 0.3
& 84.0 & 93.2 & 78.3 
& 56.0 & 81.3 & 45.5 
& 40.6 & 62.4 & 25.3 \\
PA = 1
& 76.3 & 90.2 & 68.8 
& 72.2 & \textbf{89.8} & 64.8 
& 49.6 & \textbf{77.4} & 38.4 \\
\bottomrule
\end{tabular}}
\end{table}

\textbf{Ablation Study.} Table~\ref{tab:ablation_pos_reward_final} presents an ablation analysis on the effect of positive advantage assignment for faithful tokens. Removing positive advantages entirely (PA = 0) results in substantial drops in both faithfulness and informativeness, confirming that penalizing hallucinated tokens alone is insufficient for stable optimization. When a partial positive advantage (PA = 0.3) is assigned, the model exhibits moderate improvements in informativeness but still suffers a notable degradation in faithfulness. Assigning full positive advantages (PA = 1) further shifts the balance toward faithful tokens, improving faithfulness in some cases but occasionally increasing verbosity. Notably, RLFH~\citep{wen2025policy}  relies on fixed-value rewards with strong heuristic assumptions, so the last two rows can be viewed as an adaptation of the method to our scenario. In contrast, the standard BALTO configuration, which uses the fully balanced token-level credit assignment, achieves the highest faithfulness and Q-Score across all datasets. These results highlight the importance of balanced credit assignment: providing positive reinforcement to faithful tokens while penalizing hallucinated tokens allows the model to reduce hallucinations effectively without sacrificing overall response quality.

\begin{figure}[t]
  \centering
  \begin{minipage}[t]{0.48\textwidth}
    \centering
    \includegraphics[width=0.9\linewidth]{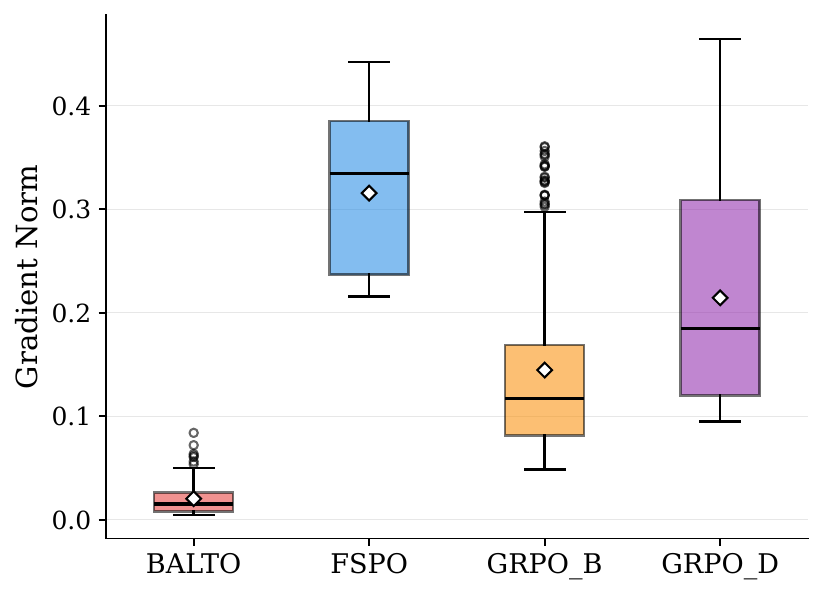}
    % \captionsetup{font=scriptsize}
    \caption{Gradient norm distribution on RAGTruth.}
    \label{fig:grad_norm_boxplot}
  \end{minipage}
  \hfill
  \begin{minipage}[t]{0.48\textwidth}
    \centering
    \includegraphics[width=0.9\linewidth]{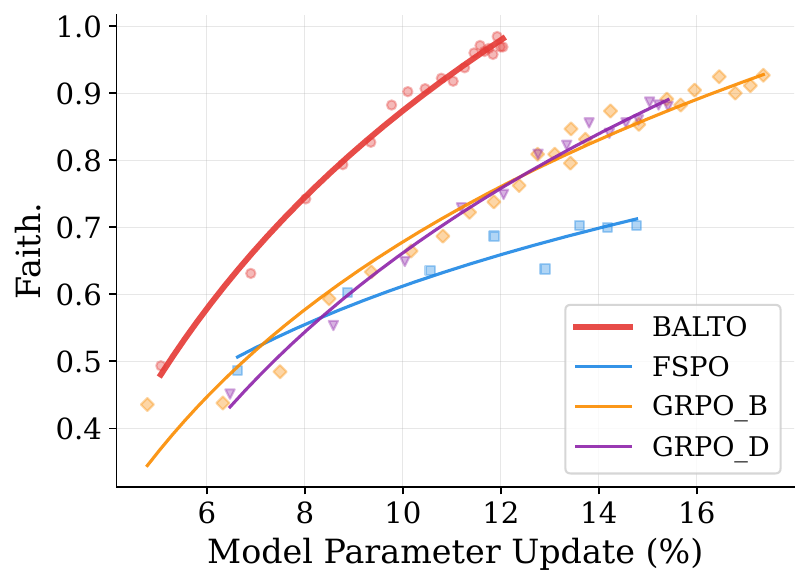} % 调整图片大小以适应预留的空间
    % \captionsetup{font=scriptsize}
    \caption{Faithfulness vs. model parameter update on RAGTruth. Curves are fitted using logarithmic functions \(y = a \cdot \ln(x+1) + b\).}
    \label{fig:acc_vs_pc_val_single_RAGTruth}
  \end{minipage}
    \vspace{-20pt}
  
\end{figure}

\textbf{Training Dynamics.} Figure~\ref{fig:grad_norm_boxplot} shows the distribution of policy gradient norms across training steps on the RAGTruth dataset. Compared with response-level RL methods, BALTO exhibits gradient norms that are an order of magnitude smaller (mean 0.020 vs.\ 0.145–0.214) and more tightly concentrated. This behavior reflects the sparse and balanced token-level advantage structure in BALTO: only hallucinated and corresponding faithful tokens receive non-zero advantages, while all other neutral tokens are ignored. By restricting gradient updates to the most relevant tokens, BALTO reduces the influence of noisy signals from neutral tokens, resulting in lower variance and more stable optimization trajectories, as theoretically predicted in Theorem \ref{thm:stability}.

Figure~\ref{fig:acc_vs_pc_val_single_RAGTruth} plots faithfulness against the fraction of model parameters that have updated from initialization \citep{mukherjee2025reinforcement}. BALTO achieves 0.98 accuracy with only approximately 12\% of parameters updated, whereas GRPO$_B$ requires around 17\% of parameters updated to reach 0.93 faithfulness. This indicates that token-level signals guide the optimizer to focus on a smaller and more relevant subset of parameters, specifically those that are directly responsible for generating hallucinated content, while maintaining the model's broader capabilities acquired during pre-training. The practical implication is that BALTO can reduce hallucinations more effectively while minimizing the risk of degrading general language performance.

\begin{figure}[htbp]
    \centering
    \includegraphics[width=\linewidth]{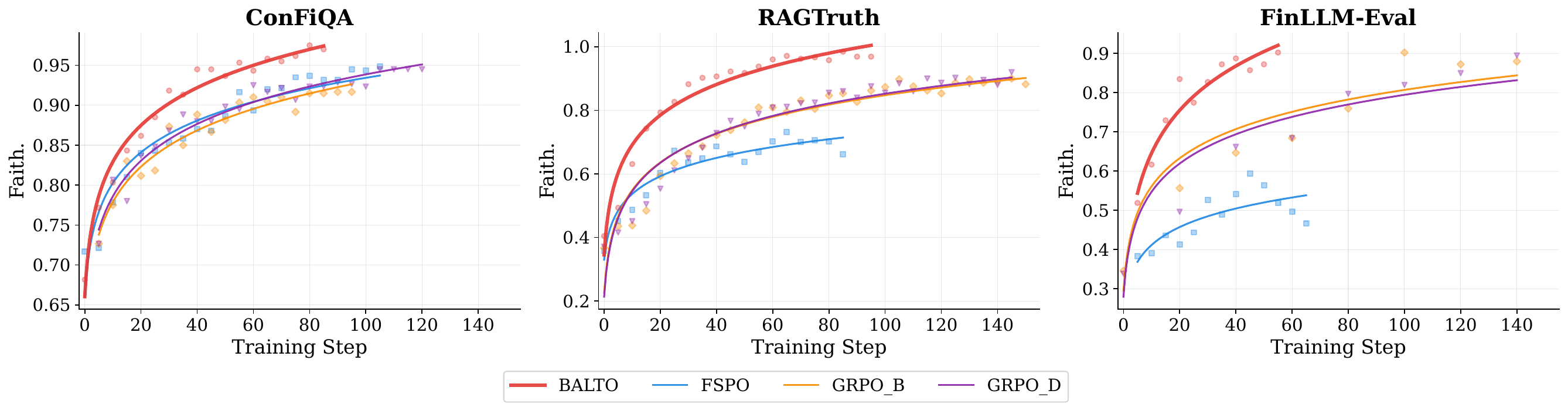}
    \caption{Faithfulness improvement during training across different RL methods on three datasets. Curves are fitted using logarithmic functions \(y = a \cdot \ln(x+1) + b\).}
    \label{fig:training_efficiency}
    \vspace{-10pt}
\end{figure}

\textbf{Efficiency.}
Figure~\ref{fig:training_efficiency} shows the improvement of faithfulness across training for different RL methods. BALTO consistently achieves faster early-stage gains and reaches higher final faithfulness with fewer training steps, reflecting the more informative token-level learning signals provided by balanced credit assignment. These observations align with Theorem \ref{thm:efficiency}, which demonstrates that BALTO maintains strong, bounded gradients across all stages of training, ensuring efficient exploration and stable convergence. In contrast, response-level methods such as GRPO$_B$ and GRPO$_D$ converge more slowly and are less effective on datasets with severe hallucinations, highlighting the advantage of fine-grained, balanced credit assignment for both performance and optimization efficiency.

\section{Conclusion}
In this work, we introduced BALTO, a balanced token-level policy optimization framework for fine-grained hallucination mitigation. Motivated by the localized nature of faithfulness hallucinations, BALTO addresses the credit assignment mismatch of response-level rewards by converting factuality judgments into token-level signals. By balancing positive and negative token-level credit assignment within each response, BALTO encourages probability mass to move from hallucinated content toward faithful content with unbiased policy optimization.

Our theoretical analysis shows that balanced token-level credit assignment reduces unnecessary updates, lowers gradient variance, and provides bounded, stable optimization signals. Empirically, BALTO consistently improves the faithfulness--informativeness trade-off across ConFiQA, RAGTruth, and FinLLM-Eval with Qwen3-8B and Qwen3-4B, achieving stronger faithfulness and Q-Score than response-level and token-level RL baselines. These results highlight balanced fine-grained credit assignment as an effective direction for building more faithful and informative LLMs.

\bibliographystyle{plainnat}
\bibliography{references}

%%%%%%%%%%%%%%%%%%%%%%%%%%%%%%%%%%%%%%%%%%%%%%%%%%%%%%%%%%%%

\newpage

\appendix

\section{Related Work}
\label{sec:related}
\subsection{Hallucination}

In LLMs, hallucinations refer to syntactically fluent outputs that contain factual errors or contradict source materials \citep{ji2023survey, alansari2026large}. Prior mitigation strategies are primarily data-centric or model-centric. Data-centric approaches optimize curation quality \citep{zhou2023lima}, employ specialized prompting \citep{jiang2023ai, sahoo2024systematic}, or utilize RAG~\citep{ding2024retrieve, bechard2024reducing} and reasoning techniques like Chain-of-Thought or self-consistency\citep{wang2022self, zhao2024enhancing}. Conversely, model-centric methods refine internal mechanisms through faithful fine-tuning \citep{hu2024mitigating, li2025know} and contrastive decoding \citep{chuang2024decoding}. More recently, RL has emerged as a significant direction \citep{wen2025policy, li2026reasoning}, which we detail below.

\subsection{RL for Faithfulness Alignment}

While RL is critical for model alignment, prior approaches often employ coarse-grained rewards \citep{chen2503research, song2025r1}, which may suffer from sparsity and imprecise guidance \citep{lightman2023let}. This has driven a shift towards fine-grained faithfulness alignment, where models are evaluated using claim-level correctness \citep{chen2025learning, dou2026baichuan} or atomic facts \citep{ren2025knowrl, yin2026mitigating}. However, these methods typically lack token-level precision or are constrained to specific domains. Furthermore, existing approaches that claim to utilize token-level rewards \citep{sunmechanistic, singh2025fspo, wen2025policy} either still depend on response-level signals or lack an effective credit balancing mechanism.

\subsection{Fine-Grained Rewards and Process Supervision}

To address the sparse reward issue inherent in traditional outcome-based models \citep{lightman2023let}, recent literature has shifted towards fine-grained process supervision, primarily integrating intermediate signals into PPO or GRPO frameworks. Within PPO, approaches assign credit at the step-level to reinforce error localization \citep{cheng2025stop}, or at the token-level utilizing mechanisms such as attention distributions, edit distances, or discriminative optimization \citep{chan2024dense, yoon2024tlcr, chen2025discriminative}. Parallelly, GRPO-based methods explore diverse advantage estimation techniques, ranging from stepwise dynamic scaling and generative credit assignment \citep{zhang2025reward, xie2025capo} to token-level advantage computation via decaying accumulation, entropy-based shaping, or statistical measures \citep{he2025good, tan2025gtpo, sun2025ktae}. Despite these significant algorithmic advancements, genuinely integrating token-level rewards specifically for hallucination mitigation remains underexplored \citep{wen2025policy}, a critical gap that our work directly addresses.

\section{Limitations}
\label{sec:limitations}
Despite its effectiveness, BALTO has several limitations. First, our method relies on an LLM-based faithfulness evaluator to extract and verify factual claims, and thus its performance is bounded by the accuracy and consistency of this evaluator. Errors in claim extraction, evidence matching, or token-level projection may introduce noisy advantage assignments, especially for implicit claims, long-range reasoning dependencies, or ambiguous evidence. Second, BALTO is primarily designed for retrieval-grounded faithfulness alignment, where reference documents are available for verification. Its applicability to open-domain generation without explicit evidence, or to tasks involving subjective, creative, or preference-based outputs, remains less explored. 

\section{Broader Impact}
A key positive impact of BALTO is its ability to provide more precise supervision for hallucination mitigation. This may improve the usefulness of retrieval-augmented systems, financial assistants, medical question-answering systems, and other applications where factual precision is critical. In addition, BALTO’s variance-reduction properties and efficient optimization behavior may reduce training instability and computational waste during RL-based alignment.

At the same time, our method also introduces risks. BALTO relies on an external faithfulness evaluator to identify hallucinated and faithful tokens. If the evaluator itself contains biases, factual inaccuracies, or systematic verification errors, these imperfections may propagate into the optimization process and reinforce incorrect supervision signals.

\section{Complete Mathematical Proofs for Theoretical Analysis}
\label{app:proofs}

In this appendix, we provide the complete mathematical derivations for the theorems presented in Section \ref{sec:theory}. We first establish the rigorous foundations of the policy gradient estimators used in our analysis.

\subsection{Preliminaries and Definition of Policy Gradient Estimators}

Let $y = (y_1, y_2, \dots, y_T)$ be a response of length $T$ generated by a language model policy $\pi_\theta(\cdot | x)$ given prompt $x$. The trajectory generation probability is given by $\pi_\theta(y | x) = \prod_{t=1}^T \pi_\theta(y_t | x, y_{<t})$.

We define the per-token score function (the gradient of the log-probability) as:
\begin{equation}
    g_t = \nabla_\theta \log \pi_\theta(y_t | x, y_{<t})
\end{equation}
According to the log-derivative trick, the expectation of the score function over the action space is strictly zero: $\mathbb{E}_{y_t \sim \pi_\theta}[g_t] = 0$.

The response-level score function, representing the gradient of the entire response, is the sum of the per-token score functions:
\begin{equation}
    G(y) = \nabla_\theta \log \pi_\theta(y | x) = \sum_{t=1}^T \nabla_\theta \log \pi_\theta(y_t | x, y_{<t}) = \sum_{t=1}^T g_t
\end{equation}

\textbf{Bernoulli Reward Setting.} To evaluate the optimization dynamics, we model the generation process as yielding two distinct types of trajectories:
\begin{itemize}
    \item \textbf{Faithful response ($y^+$):} Generated with probability $p$, yielding a reward $r=1$. It contains no hallucinations. We denote its response-level score function as $G(y^+) = \sum_{t=1}^{T^+} g_t^+$.
    \item \textbf{Hallucinated response ($y^-$):} Generated with probability $1-p$, yielding a reward $r=0$. It contains at least one hallucination. We denote its response-level score function as $G(y^-) = \sum_{t=1}^{T^-} g_t^-$.
\end{itemize}

Following our methodology, the token indices of any response are partitioned into three disjoint sets: the hallucinated factual token set $\mathcal{H}$ (size $N^-$), the faithful factual token set $\mathcal{F}$ (size $N^+$), and the neutral token set $\mathcal{N}$. For $y^+$, $N^- = 0$. For $y^-$, $N^- \ge 1$.

\textbf{Derivation of the Response-Level GRPO Estimator.} 
The standard RL objective maximizes the expected reward: $J_{\mathrm{resp}}(\theta) = \mathbb{E}_{y \sim \pi_\theta}[R(y)]$. 
Using the REINFORCE algorithm and incorporating the GRPO normalization baseline, the gradient of the objective is:
\begin{equation}
    \nabla_\theta J_{\mathrm{resp}}(\theta) = \mathbb{E}_{y \sim \pi_\theta} \left[ A \cdot \nabla_\theta \log \pi_\theta(y | x) \right] = \mathbb{E}_{y \sim \pi_\theta} \left[ A \cdot G(y) \right]
\end{equation}
where $A = \frac{r - \mu}{\sigma}$ is the normalized global advantage. Given $\mu = p$ and $\sigma = \sqrt{p(1-p)}$, the respective advantages for $y^+$ and $y^-$ are:
\begin{equation}
    \label{eq:app_grpo_adv}
    A^+ = \sqrt{\frac{1-p}{p}}, \quad A^- = -\sqrt{\frac{p}{1-p}}
\end{equation}
Thus, the unbiased policy gradient estimator for a single sampled response in standard GRPO is:
\begin{equation}
    \hat{\nabla} J_{\mathrm{resp}} = A \sum_{t=1}^T g_t
\end{equation}

\textbf{Derivation of the BALTO Estimator.} 
BALTO optimizes a fine-grained objective where advantages are assigned at the token level. The gradient of the BALTO objective can be expressed as:
\begin{equation}
    \nabla_\theta J_{\mathrm{BALTO}}(\theta) = \mathbb{E}_{y \sim \pi_\theta} \left[ \sum_{t=1}^T A_t^{\mathrm{bal}} \nabla_\theta \log \pi_\theta(y_t | x, y_{<t}) \right] = \mathbb{E}_{y \sim \pi_\theta} \left[ \sum_{t=1}^T A_t^{\mathrm{bal}} g_t \right]
\end{equation}
where the token-level advantages $A_t^{\mathrm{bal}}$ strictly follow the balanced assignment in Eq.~\ref{eq:balanced_advantage}. The corresponding policy gradient estimator for a single sampled response in BALTO is:
\begin{equation}
    \hat{\nabla} J_{\mathrm{BALTO}} = \sum_{t=1}^T A_t^{\mathrm{bal}} g_t
\end{equation}

% ----------------------------------------------------------------------
\subsection{Proof of Theorem 1: Stable Convergence via Variance Reduction}\label{proof1}

In policy gradient methods, optimization stability is inversely proportional to the variance of the gradient estimator. We evaluate the trace of the second-moment matrix of the gradient, $\mathbb{E}[||\hat{\nabla} J||^2]$, which dominates the variance near a local optimum.

We assume the covariance matrix of local token gradients is approximately identical and isotropic: $\mathbb{E}[g_t g_t^\top] = \Sigma$. Given that $\mathbb{E}[g_t] = 0$, gradients at different time steps are orthogonal in expectation: $\mathbb{E}[g_i g_j^\top] = 0$ for $i \neq j$.

\textbf{Variance of Response-Level GRPO:}
\begin{equation}
    \mathrm{Var}(\hat{\nabla} J_{\mathrm{resp}}) \approx \mathbb{E} \left[ A^2 \left(\sum_{t=1}^T g_t\right)\left(\sum_{t=1}^T g_t\right)^\top \right]
\end{equation}
Expanding the outer product and eliminating the orthogonal cross-terms ($\mathbb{E}[g_i g_j^\top] = 0$):
\begin{equation}
    \mathrm{Var}(\hat{\nabla} J_{\mathrm{resp}}) \approx \mathbb{E}[A^2] \sum_{t=1}^T \mathbb{E}[g_t g_t^\top] = \mathbb{E}[A^2] \cdot T \Sigma
\end{equation}
We calculate the expected squared advantage $\mathbb{E}[A^2]$ over the Bernoulli distribution using Eq.~\ref{eq:app_grpo_adv}:
\begin{equation}
    \mathbb{E}[A^2] = p(A^+)^2 + (1-p)(A^-)^2 = p\left(\frac{1-p}{p}\right) + (1-p)\left(\frac{p}{1-p}\right) = (1-p) + p = 1
\end{equation}
Thus, the variance of the standard GRPO estimator is strictly proportional to the full response length:
\begin{equation}
    \label{eq:var_grpo}
    \mathrm{Var}(\hat{\nabla} J_{\mathrm{resp}}) \approx 1 \cdot T \Sigma = \mathcal{O}(T \Sigma)
\end{equation}

\textbf{Variance of BALTO:}
For BALTO, the second moment of the estimator is:
\begin{equation}
    \mathrm{Var}(\hat{\nabla} J_{\mathrm{BALTO}}) \approx \mathbb{E} \left[ \left(\sum_{t=1}^T A_t^{\mathrm{bal}} g_t\right)\left(\sum_{t=1}^T A_t^{\mathrm{bal}} g_t\right)^\top \right] = \mathbb{E} \left[ \sum_{t=1}^T (A_t^{\mathrm{bal}})^2 g_t g_t^\top \right] = \mathbb{E} \left[ \sum_{t=1}^T (A_t^{\mathrm{bal}})^2 \right] \Sigma
\end{equation}
For faithful responses $y^+$ (which occur with probability $p$), there are no hallucinations ($N^-=0$), and BALTO assigns $A_t^{\mathrm{bal}} = 0$ for all $t$. The variance only accumulates from hallucinated trajectories $y^-$ (which occur with probability $1-p$). 

Applying the specific advantage definitions from Eq.~\ref{eq:balanced_advantage} across the three disjoint token sets ($\mathcal{H}, \mathcal{F}, \mathcal{N}$) within $y^-$:
\begin{align}
    \mathrm{Var}(\hat{\nabla} J_{\mathrm{BALTO}}) &= (1-p) \cdot \mathbb{E}_{y^-} \left[ \sum_{t \in \mathcal{H}} (-1)^2 + \sum_{t \in \mathcal{F}} \left(\frac{N^-}{N^+}\right)^2 + \sum_{t \in \mathcal{N}} (0)^2 \right] \Sigma \nonumber \\
    &= (1-p) \cdot \mathbb{E}_{y^-} \left[ N^- \cdot 1 + N^+ \cdot \frac{(N^-)^2}{(N^+)^2} + |\mathcal{N}| \cdot 0 \right] \Sigma \nonumber \\
    &= (1-p) \cdot \mathbb{E}_{y^-} \left[ N^- + \frac{(N^-)^2}{N^+} \right] \Sigma
\end{align}
By deliberately assigning zero advantages to the neutral set $\mathcal{N}$, BALTO entirely eliminates the noise contribution from tokens irrelevant to factual checkability. Furthermore, in practical LLM generation, hallucinated tokens are sparse relative to the faithful content ($N^- \ll N^+$). Consequently, the fractional term $\frac{(N^-)^2}{N^+} < N^-$. The overall variance thus tightly compresses to:
\begin{equation}
    \mathrm{Var}(\hat{\nabla} J_{\mathrm{BALTO}}) \approx (1-p) \mathbb{E}[N^-] \Sigma = \mathcal{O}(N^- \Sigma)
\end{equation}
This rigorous derivation proves that BALTO completely decouples the gradient variance from the total response length $T$, bounding it exclusively to the size of the localized hallucination span $N^-$, which ensures highly stable optimization. \hfill $\blacksquare$

% ----------------------------------------------------------------------
\subsection{Proof of Theorem 2: High Optimization Efficiency in Asymptotic Regimes}\label{proof2}

Optimization efficiency relies on the expected magnitude of the policy gradient providing a strong, informative signal. We analyze this expectation at the two extremes of alignment training: $p \to 0$ (cold-start) and $p \to 1$ (late-stage convergence).

\textbf{Part I: Gradient Starvation in the Cold-Start Regime ($p \to 0$)}

For Response-Level GRPO, we expand the expected gradient over the Bernoulli outcomes using the predefined response-level score functions $G(y^+)$ and $G(y^-)$:
\begin{align}
    \mathbb{E}[\hat{\nabla} J_{\mathrm{resp}}] &= p \cdot \mathbb{E}_{y^+}\left[ A^+ G(y^+) \right] + (1-p) \cdot \mathbb{E}_{y^-}\left[ A^- G(y^-) \right] \nonumber \\
    &= p \sqrt{\frac{1-p}{p}} \mathbb{E}_{y^+}[G(y^+)] - (1-p) \sqrt{\frac{p}{1-p}} \mathbb{E}_{y^-}[G(y^-)] \nonumber \\
    &= \sqrt{p(1-p)} \Big( \mathbb{E}_{y^+}[G(y^+)] - \mathbb{E}_{y^-}[G(y^-)] \Big)
\end{align}
The overall magnitude of the learning signal is constrained by the global normalization coefficient $\sqrt{p(1-p)}$. Taking the limit as the faithful rate $p \to 0$:
\begin{equation}
    \lim_{p \to 0} ||\mathbb{E}[\hat{\nabla} J_{\mathrm{resp}}]|| \propto \lim_{p \to 0} \sqrt{p(1-p)} = 0
\end{equation}
This mathematically demonstrates \textit{gradient starvation}: when a task is exceedingly difficult and global rewards are uniformly $0$, standard GRPO normalizes the advantages to zero, providing no effective optimization signal to correct errors.

Conversely, for BALTO, faithful responses $y^+$ yield zero advantage. The gradient expectation strictly depends on the hallucinated responses $y^-$:
\begin{equation}
    \mathbb{E}[\hat{\nabla} J_{\mathrm{BALTO}}] = p \cdot 0 + (1-p) \cdot \mathbb{E}_{y^-} \left[ \sum_{t=1}^T A_t^{\mathrm{bal}} g_t \right]
\end{equation}
The magnitude of BALTO's expected gradient is governed by the pre-factor $(1-p)$. Taking the limit as $p \to 0$:
\begin{equation}
    \lim_{p \to 0} ||\mathbb{E}[\hat{\nabla} J_{\mathrm{BALTO}}]|| \propto \lim_{p \to 0} (1-p) = 1
\end{equation}
BALTO achieves its theoretical maximum expected gradient magnitude precisely when the model's performance is at its lowest, guaranteeing highly efficient exploration and parameter updates during the cold-start phase.

\vspace{2mm}
\textbf{Part II: Divergent Penalties in the Convergence Regime ($p \to 1$)}

We now analyze the penalty assigned to a rare, hallucinated trajectory $y^-$ as the model approaches near-perfect alignment ($p \to 1$).
For standard GRPO, the penalty is determined by $A^-$:
\begin{equation}
    \lim_{p \to 1} A^- = \lim_{p \to 1} -\sqrt{\frac{p}{1-p}} = -\infty
\end{equation}
This divergence reveals a critical instability: a singular, rare mistake triggers an infinitely large negative update applied globally across the entire response. This disproportionate penalty destroys late-stage optimization efficiency by destabilizing previously well-aligned parameters.

In stark contrast, the advantages in BALTO are deterministically assigned based on local token sets according to Eq.~\ref{eq:balanced_advantage}:
\begin{equation}
    A_t^{\mathrm{bal}} \in \left\{ -1, \frac{N^-}{N^+}, 0 \right\}
\end{equation}
Because hallucinations are a sparse minority of the response ($N^- \ll N^+$), the compensation advantage $N^- / N^+$ is a small positive fraction. Thus, all local advantages strictly remain bounded within $[-1, 1]$, entirely independent of the global faithful rate $p$.

Furthermore, analyzing the expected gradient magnitude for BALTO as $p \to 1$:
\begin{equation}
    \lim_{p \to 1} ||\mathbb{E}[\hat{\nabla} J_{\mathrm{BALTO}}]|| \propto \lim_{p \to 1} (1-p) = 0
\end{equation}
This proves that BALTO ensures efficient and graceful final convergence: as the model ceases to hallucinate, the gradient expectation naturally decays to zero without ever triggering extreme penalties or gradient explosion. \hfill $\blacksquare$

\section{Implementation Details}\label{app:implementation-details}

\subsection{Baselines.}
\paragraph{Surprised Fine-Tuning (SFT).} When constructing the SFT dataset, we first generated responses using the base model. We then employed Qwen3.5-35B-A3B as a reward model to detect hallucinations within these responses, with the prompt in Appendix \ref{prompt:hallucination-detection}. For responses identified as containing hallucinations, we further used Qwen3.5-35B-A3B guided by the detection results to correct the original outputs, aiming to preserve the original content structure as much as possible. The prompt used for this correction process is provided in Appendix \ref{prompt:response-correction}. Both the original correct responses and the corrected responses were subsequently used as the training data for supervised fine-tuning.

\paragraph{Direct Preference Optimization (DPO).} We constructed preference pairs using the SFT dataset: the negative samples were generated by the base model during the SFT data construction process, while the positive samples were obtained from the corrected responses. These preference pairs were then used to train the model with the DPO objective.

\paragraph{Group Relative Policy Optimization (GRPO).}  In the training process, we employed Qwen3.5-35B-A3B as a reward model to detect hallucinations within responses, with the prompt in Appendix \ref{prompt:hallucination-detection}. For GRPO$_B$, it gives a 0/1 reward to indicate if the response has hallucination. For GRPO$_D$, we calculate the ratio of faithful claims as the reward.

\paragraph{FSPO.} Similar to GRPO$_D$, we calculate the ratio of faithful claims as the reward to calculate the advantage. The advantage of the faithful claims in responses with negative advantage will be inverted. Similarily, The advantage of the hallucinated claims in responses with positive advantage will be inverted. 

\paragraph{BALTO.} As the same setting, we use the reward model to detect hallucination tokens within responses, with the prompt in Appendix \ref{prompt:hallucination-detection}.

\subsection{Evaluation}
\label{app:evaluation}

\paragraph{LLM-based Evaluation.}
We use Qwen3.5-35B-A3B as the evaluator to extract claims from model responses and determine whether each claim is supported by the reference evidence. 
The detailed prompts used for claim extraction and hallucination detection are provided in Appendix \ref{prompt:hallucination-detection}. 

\paragraph{Faithfulness.}
Faithfulness (Faith.) is calculated as the proportion of responses without hallucinated claims over the test set. 

\paragraph{Informativeness.}
Informativeness (Info.) measures whether a response preserves sufficient information compared to a reference response (generated by the base model). 
For each example, a response receives a score of 1 if it contains at least as many claims as the reference; otherwise, it receives a score of 0. The final Info. score is averaged over the test set.

\paragraph{Q-Score.}
To jointly evaluate faithfulness and informativeness, we define:
Q-Score = Faith. $\times$ Info., which reflects the overall response quality by jointly considering correctness and completeness. The multiplicative form explicitly penalizes responses that achieve high performance on only one dimension.

\paragraph{ConFiQA Accuracy.}
For ConFiQA, the paper \citep{bi2025context} provides two metrics: $P_c$ is the frequency of responses matching the context-faithful answer (excluding negations or the original answer), and $P_o$ is the frequency of responses matching the original factual answer. We calculate the accuracy (Acc.) as:
$\text{Acc.} = P_c - P_o,$. A higher Acc. indicates better performance.

\paragraph{Win Rate.}
We follow prior work~\citep{li2026knowledgelevel, chen2025learning} to compute the win rate (WR) using LLM-based pairwise comparison. 
We use DeepSeek-v3.2 \citep{liu2025deepseek} as the judge model with the prompt in Appendix \ref{prompt:response-comparison}. For each prompt, responses from two models are compared twice with reversed ordering to mitigate position bias. 
Let $\text{Win}(1)$ and $\text{Win}(2)$ denote whether the candidate model is preferred in each ordering. 
The instance-level win indicator is defined as:
\[
\text{Win} = \frac{\text{Win}(1) + \text{Win}(2)}{2}.
\]
The overall win rate over a test set of size $N$ is:
\[
\text{WR} = \frac{1}{N} \sum_{i=1}^{N} \text{Win}_i.
\]
A win rate greater than 0.5 indicates that the candidate model is preferred over the compared model.

\subsection{Experiments Compute Resources}
We use 2x8 Nvidia H20 to conduct the experiments. The training time of Qwen3-8B and Qwen3-4B in one epoch is around 30 minites.

\section{Prompts}\label{app:prompts}

\subsection{Hallucination Detection}\label{prompt:hallucination-detection}
\begin{promptbox}{Hallucination Detection}
# Role
You are a strict auditor of response accuracy. Your core competency lies in meticulously cross-referencing statements within a response text against provided reference materials. Your objective is to verify whether every informational statement in the response text aligns with the reference materials, and to precisely pinpoint any specific erroneous segments.

# Task
I will provide a "Response Text" and a set of "Reference Materials." Your task is to verify, sentence by sentence, every informational statement within the Response Textspecifically validating whether its assertions are consistent with the content found in the Reference Materialsand to precisely identify any specific erroneous segments.

# Input Data
## [User Query & Reference Materials]
{{user_prompt}}

## [Response Text]
{{response}}

# Workflow
1. **Extraction:** Extract all informational sentences from the "Response Text." 
- Extracted sentences typically contain specific data pointssuch as figures (dates, times, and various other numerical values), entities (people, places, venues, etc.), and logical relationships (e.g., whether a cause-and-effect link holds true).
2. **Localization:** Locate the corresponding paragraphs within the "Reference Materials."
3. **Comparison:** Compare the statements made in the response against the original wording found in the reference materials.
4. **Verdict:**
- **Correct:** The statement aligns perfectly with the content found in the reference materials. 
- **Incorrect:** A corresponding statement exists in the reference materials, but the response's phrasing does not match it; *or* the statement cannot be found within the specified reference materials (i.e., it is fabricated/hallucinated).
5. **Error Localization:** For every incorrect item, further pinpoint the specific "erroneous segment" (`error_spans`) within the "Response Text" itself, to facilitate subsequent token-level penalties. 
- For each incorrect item, you must identify the minimal erroneous segment within the [Response Text] and output it to the `error_spans` list. 
- Each entry in `error_spans` must be a contiguous substring taken directly from the original Response Text. 
- If a single error spans multiple non-contiguous segments, split them into separate elements and add each to the `error_spans` list. 
- Avoid marking entire sentences as erroneous segments; instead, identify only the minimal, specific error fragments. 
- If no errors are detected, the `error_spans` list must be empty. # Possible Errors All possible data accuracy error types:

- conflict: Factual contradiction (the model generates information that contradicts facts in the references, such as misattribution of dates, metrics, entities, etc.)

- fabrication: Fabrication (the model generates information that is not mentioned in the original text/context)

- no_error: No error

# Output Format Please analyze carefully. The output should be in the following format. Strictly adhere to the format and do not output any extra symbols:

### Information Accuracy Analysis

{
"details": [

{
"claim_text": "The extracted sentence containing the information",

"source_text": "The corresponding sentence in the references",

"analysis": "Analyze whether there is a match or whether the calculation result is correct",

"error_type": "One of the error types in Possible Errors: conflict/fabrication/no_error",

"judgment_result": "Correct/Incorrect",

"error_spans": ["Error Span 1", "Error Span 2", ...]

},

{ "claim_text": "The extracted sentence containing the information",

"source_text": "The corresponding sentence from the references",

"analysis": "Analysis of whether a match is found or whether the calculation result is correct",

"error_type": "One of the error types in Possible Errors: conflict/fabrication/no_error",

"judgment_result": "Correct/Incorrect",

"error_spans": ["Error Span 1", "Error Span 2", ...]

},...

]
}
### Summary of Response Accuracy The response contains X claims, of which Y are correct matches and Z are incorrect matches (Note that this summary must strictly adhere to the format; do not output any extra symbols). If illusions exist, summarize the illusion points.
\end{promptbox}

\subsection{Response Correction}\label{prompt:response-correction}
\begin{promptbox}{Response Correction}
# Role
You are a rigorous text proofreading and fact-checking expert. Your task is to verify and correct numerical or factual errorsidentified in the [Hallucination Detection Report]within the [Original Reasoning Process] and [Original Answer], based on the provided [Reference Materials].

# Rules
When making modifications, please strictly adhere to the following workflow and editing principles:

## Step 1: Verify the Accuracy of Hallucination Detection
Before making any changes, you must first cross-reference the [Reference Materials] to determine whether the errors flagged in the [Hallucination Detection Report] actually exist.
- If the [Hallucination Detection Report] is accurate (i.e., the original text is indeed incorrect), proceed to Step 2 to make the necessary corrections.
- If the [Hallucination Detection Report] is a false positive (i.e., the original text actually aligns with the reference materials), retain the original text and do not make that specific correction.

## Step 2: Apply the "Minimal Change" Principle
If you confirm that the original text contains an error, you must correct it in both the [Original Reasoning Process] and the [Original Answer]. When making corrections, strictly follow the priority order below (from highest to lowest):
1. **Precise Numerical Replacement (Primary Choice)**: Modify only the incorrect number (e.g., `17.96%`  `12.08%`).
2. **Precise Entity Replacement (Secondary Choice)**: If changing only the number results in a semantic contradiction (e.g., a subject mismatch), make minor adjustments to the local subject entity name or qualifier (e.g., `Bond D`  `Bond A`).
3. **Local Refinement (Tertiary Choice)**: If simply replacing the number or entity leads to grammatical errors, semantic incoherence, or logical conflicts within the context, you are permitted to make minor additions, deletions, or stylistic tweaks to the specific sentenceor its immediate contextto ensure it reads smoothly.
4. **Full Sentence Deletion (Last Resort)**: If the erroneous information is a complete fabrication (a "hallucination") that cannot be factually corrected through simple editsor if correcting it would cause the logic of the entire paragraph to collapsedirectly delete the entire sentence containing the error.

## Step 3: Formatting Requirements
- Preserve the original text structure, tone, and formatting (e.g., Markdown headings, lists, etc.). - The citation markers in the original text (such as `[12](@ref)`) must remain accurate after modification. If a sentence is deleted, please also delete the citation markers that belong to that sentence.

---

# Input Data

### [References]

${user_prompt}

### [Hallucination Detection Results]

${detect_result}

### [Original Reasoning Process]

${reasoning_content}

### [Original Answer]

${answer}

---

# Output Format
Please output your results in the following structure:

### 1. Verification and Modification Explanation

(Briefly explain your verification results of the [Hallucination Identification Report] and what level of modification strategy you adopted: replacement/fine-tuning/deletion.)

### 2. Modified Reasoning Process

(Output the corrected complete reasoning_content)

### 3. Modified Answer

(Output the corrected complete answer)
\end{promptbox}

\subsection{Response Comparison}\label{prompt:response-comparison}
\begin{promptbox}{Response Comparison}
You are a helpful assistant, that ranks models by the quality of their answers.

I want you to create a leaderboard of different of large-language models. To do so, I will give you the instructions (prompts) given to the models, and the responses of two models. Please rank the models based on which responses would be preferred by humans. All inputs and outputs should be python dictionaries.

Here is the prompt:
{
    "instruction": """{{instruction}}"""
}

Here are the outputs of the models:
[
    {
        "model": "model_1",
        "answer": """{{output_1}}"""
    },
    {
        "model": "model_2",
        "answer": """{{output_2}}"""
    }
]

Now please analyze the two models' answers and return the better model.

Output format:

### Analysis
xxx (your analysis)

### Better Model
model_1 / model_2 / tie (The model you think is better. If you think they are equally good, return "tie")
\end{promptbox}

%%%%%%%%%%%%%%%%%%%%%%%%%%%%%%%%%%%%%%%%%%%%%%%%%%%%%%%%%%%%

\end{document}